\newcommand{\bE}{\mathbb{E}}
\newcommand{\reals}{\mathbb{R}}
\newcommand{\bN}{\mathbb{N}}
\newcommand{\poly}{\mathrm{poly}}
\newcommand{\cD}{\mathcal{D}}
\newcommand{\cB}{\mathcal{B}}
\newcommand{\cP}{\mathcal{P}}
\newcommand{\cA}{\mathcal{A}}
\newcommand{\cL}{\mathcal{L}}
\newcommand{\cW}{\mathcal{W}}
\newcommand{\cI}{\mathcal{I}}
\newenvironment{proof}{\par\noindent{\bf Proof\ }}{\hfill\BlackBox\\[2mm]}
\newcommand{\BlackBox}{\rule{1.5ex}{1.5ex}}  
\def\moverlay{\mathpalette\mov@rlay}
\def\mov@rlay#1#2{\leavevmode\vtop{%
   \baselineskip\z@skip \lineskiplimit-\maxdimen
   \ialign{\hfil$\m@th#1##$\hfil\cr#2\crcr}}}
\newcommand{\charfusion}[3][\mathord]{
    #1{\ifx#1\mathop\vphantom{#2}\fi
        \mathpalette\mov@rlay{#2\cr#3}
      }
    \ifx#1\mathop\expandafter\displaylimits\fi}
\newcommand{\cupdot}{\charfusion[\mathbin]{\cup}{\cdot}}
\newcommand{\bigcupdot}{\charfusion[\mathop]{\bigcup}{\cdot}}
\newcommand{\regret}{{R}}
\newcommand{\indct}[1]{\boldsymbol{1}\!_{\left[ #1 \right]}}
\DeclareMathOperator*{\argmin}{arg\,min}
\DeclareMathOperator*{\argmax}{arg\,max} 
\renewcommand{\eqref}[1]{Equation~(\ref{#1})}
\newcommand{\figref}[1]{Figure~\ref{#1}}
\newcommand{\secref}[1]{Section~\ref{#1}}
\newcommand{\appref}[1]{Appendix~\ref{#1}}
\newcommand{\thmref}[1]{Theorem~\ref{#1}}
\newcommand{\lemref}[1]{Lemma~\ref{#1}}
\newcommand{\algref}[1]{Algorithm~\ref{#1}}
\newtheorem{theorem}{Theorem}
\newtheorem{lemma}{Lemma}
\newcommand{\act}[1]{\textrm{ACTIVE}(#1)}
\newcommand{\propref}[1]{Property~\ref{#1}}
\newcommand{\tregret}{\textrm{Tracking-Regret}}
\newcommand{\sareg}{\textrm{SA-Regret}}
\newcommand{\saol}{\textrm{SAOL}}
\begin{document} 

\title{Strongly Adaptive Online Learning}

\author{Amit Daniely\footnote{Dept. of Mathematics, The Hebrew University, Jerusalem, Israel} \and Alon Gonen\footnote{School of Computer Science, The Hebrew University, Jerusalem, Isreal} \and Shai Shalev-Shwartz\footnote{School of Computer Science, The Hebrew University, Jerusalem, Isreal}}



\maketitle 

\begin{abstract}
\emph{Strongly adaptive algorithms} are algorithms whose performance on {\em every time interval} is close to optimal. 
We present a reduction that can transform standard low-regret algorithms to strongly adaptive. As a consequence, we derive simple, yet efficient, strongly adaptive algorithms for a handful of problems.
\end{abstract} 

\section{Introduction}  \label{sec:intro}
Coping with changing environments and rapidly adapting to changes is a key component in many tasks. A broker is highly rewarded from rapidly adjusting to new trends. A reliable routing algorithm must respond quickly to congestion. A web advertiser should adjust himself to new ads and to changes in the taste of its users. A politician can also benefit from quickly adjusting to changes in the public opinion. And the list goes on.

Most current algorithms and theoretical analysis focus on relatively stationary environments. In statistical learning, an algorithm should perform well on the training distribution. Even in online learning, an algorithm should usually compete with the best strategy (from a pool), that is fixed and does not change over time.

Our main focus is to investigate to which extent such algorithms can be modified to cope with changing environments. 

We consider a general online learning framework that encompasses various online learning problems including prediction with expert advice, online classification, online convex optimization and more. In this framework, a learning scenario is defined by a decision set $D$, a context space $C$ and a set $\cL$ of real-valued loss functions defined over $D$. The {\em learner} sequentially observes a context $c_t\in C$ and then picks a decision $x_t\in D$. Next, a loss function $\ell_t\in\cL$ is revealed and the learner suffers a loss $\ell_t(x_t)$.

Often, algorithms in such scenarios are evaluated by comparing their performance to the performance of the best strategy from a pool of strategies (usually, this pool is simply all strategies that play the same action all the time). Concretely, the {\em regret,} $R_{\cA}(T)$, of an algorithm $\cA$ is defined as its cumulative loss minus the cumulative loss of the best strategy in the pool. The rationale behind this evaluation metric is that one of the strategies in the pool is reasonably good during the {\em entire} course of the game. However, when the environment is changing, different strategies will be good in different periods. As we do not want to make any assumption on the duration of each of these periods, we would like to guarantee that our algorithm performs well on {\em every} interval $I=[q,s] \subset [T]$. Clearly, we cannot hope to have a regret bound which is better than what we have for algorithms that are tested only on $I$. If this barrier is met, we say that the corresponding algorithm is {\em strongly adaptive\footnote{See a precise definition in \secref{sec:setting}. Also, see \secref{sec:related} for a weaker notion of {\em adaptive algorithms} that was studied in \cite{hazan2007adaptive}.}}.

Surprisingly maybe, our main result shows that for many learning problems strongly adaptive algorithms exist. Concretely, we show a simple ``meta-algorithm" that can use any online algorithm (that was possibly designed to have just small standard regret) as a black box, and produces a new algorithm that is designed to have a small regret on every interval. We show that if the original algorithm have a regret bound of $R(T)$, then the produced algorithm has, on every interval $[q,s]$ of size $\tau:=|I|$, regret that is very close to $R(\tau)$ (see a precise statement in \secref{sec:results}). Moreover, the running time of the new algorithm at round $t$ is just $O\left(\log(t)\right)$ times larger than that of the original algorithm. As an immediate corollary we obtain strongly adaptive algorithms for a handful of online problems including prediction with expert advice, online convex optimization, and more.

Furthermore, we show that strong adaptivity is stronger than previously suggested adaptivity properties including the adaptivity notion of  \cite{hazan2007adaptive} and the tracking notion of \cite{herbster1998tracking}. Namely, strongly adaptive algorithms are also adaptive (in the sense of  \cite{hazan2007adaptive}), and have a near optimal tracking regret (in the sense of  \cite{herbster1998tracking}). We conclude our discussion by showing that strong adaptivity can not be achieved with bandit feedback.

\subsection{Problem setting}\label{sec:setting}
\subsubsection*{A Framework for Online Learning}  \label{sec:generalOnline}
Many learning problems can be described as a repeated game between the learner and the environment, which we describe below.

A {\em learning scenario} is determined by a triplet $(D, C, \cL)$, where $D$ is a \emph{decision space}, $C$ is a set of contexts, and $\cL$ is a set of \emph{loss} functions from $D$ to $[0,1]$. Extending the results to general bounded losses is straightforward. The number of rounds, denoted $T$, is unknown to the learner. At each time $t \in [T]$, the learner sees a context $c_t\in C$, and then chooses an action $x_t\in D$. Simultaneously, the environment chooses a loss function $\ell_t \in \cL$. Then, the action $x_t$ is revealed to the environment, and the loss function $\ell_t$ is revealed to the learner which suffers the loss $\ell_t(x_t)$. We list below some examples of families of learning scenarios.
\begin{itemize}
\item {\bf Learning with expert advice \cite{cesa1997use}.} Here, there is no context (formally, $C$ consists of a single element), $D$ is a finite set of size $N$ (each element in this set corresponds to an expert), and $\cL$ consists of all functions from $D$ to $[0,1]$.
\item {\bf Online convex optimization \cite{zinkevich2003online}.} Here, there is no context as well, $D$ is a convex set, and $\cL$ is a collection of convex functions from $D$ to $[0,1]$.
\item {\bf Classification.} Here, $C$ is some set, $D$ is a finite set, and $\cL$ consists of all functions from $D$ to $\{0,1\}$ that are indicators of a single element.
\item {\bf Regression.} Here, $C$ is a subset of a Euclidean space, $D=[0,1]$, and $\cL$ consists of all functions of the form $\ell(\hat y)=(y-\hat{y})^2$ for $y\in [0,1]$. 
\end{itemize}
A {\em learning problem} is a quadruple $\cP=(D, C, \cL, \cW)$, where $\cW$ is a benchmark of {\em strategies} that is used to evaluate the performance of algorithms. Here, each strategy $w \in \cW$ makes a prediction $x_t(w)\in \cD$ based on some rule. We assume that the prediction $x_t(w)$ of each strategy is fully determined by the game's history at the time of the prediction. I.e., by $(c_1,\ell_1),\ldots,(c_{t-1},\ell_{t-1}), c_t$. Usually, $\cW$ consists of very simple strategies. For example, in context-less scenarios (like learning with expert advice and online convex optimization), $\cW$ is often identified with $D$, and the strategy corresponding to $x\in D$ simply predicts $x$ at each step. In contextual problems (such as classification and regression), $\cW$ is often a collection of functions from $C$ to $D$ (a hypothesis class), and the prediction of the strategy corresponding to $h:C\to D$ at time $t$ is simply $h(c_t)$.

The cumulative loss of $w \in \cW$ at time $T$ is $L_w(T) = \sum_{t=1} ^T \ell_t(x_t(w))$ and the cumulative loss of an algorithm $\cA$ is $L_{\cA}(T) = \sum_{t=1} ^T \ell_t(x_t)$. The cumulative regret of $\cA$ is $\regret_\cA(T) = L_\cA(T) - \inf_{w \in \cW} L_w(T)$. We define the regret, $R_{\cP}(T)$, of the learning problem $\cP$ as the minimax regret bound. Namely, $R_{\cP}(T)$ is the minimal number for which there exists an algorithm $\cA$ such that for every environment $\regret_\cA(T)\le \regret_\cP(T)$. We say that an algorithm {\em $\cA$} has {\em low regret} if $R_{\cA}(T)= O\left(\poly\left(\log T\right)R_{\cP}(T)\right)$ for every environment.

We note that both the learner and the environment can make random decisions. In that case, the quantities defined above refer to the expected value of the corresponding terms.
\subsubsection*{Strongly Adaptive Regret}
Let $I = [q,s] := \{q,q+1,\ldots,s\} \subseteq [T]$. The loss of $w \in \cW$ during the interval $I$ is $L_{w}(I) = \sum_{t=q}^s \ell_t(x_t(w))$ and the loss of an algorithm $\cA$ during the interval $I$ is $L_\cA(I) = \sum_{t=q} ^s \ell_t(x_t)$. The regret of $\cA$ during the interval $I$ is $\regret_\cA (I) = L_\cA(I) - \inf_{w\in\cW}L_w(I)$. The {\em strongly adaptive regret} of $\cA$ at time $T$ is the function
\begin{equation*}
\sareg^T_{\cA} (\tau) = \max_{I=[q,q+\tau-1]\subset [T]}\regret_\cA (I) 
\end{equation*}
We say that $\cA$ is {\em strongly adaptive} if for every environment, $\sareg^T_{\cA}(\tau)= O\left(\poly\left(\log T\right)\cdot R_\cP(\tau)\right)$.

\subsection{Our Results}\label{sec:results}
\subsubsection*{A strongly adaptive meta-algorithm}
Achieving strongly adaptive regret seems more challenging than ensuring low regret. Nevertheless, we show that often, low-regret algorithms can be transformed into a strongly adaptive algorithms with a little extra computational cost.

Concretely, fix a learning scenario $(D,C,\cL)$. We derive a strongly adaptive meta-algorithm, that can use any algorithm $\cB$ (that presumably have low regret w.r.t. some learning problem) as a black-box.
We call our meta-algorithm Strongly Adaptive Online Learner (SAOL). The specific instantiation of SAOL that uses $\cB$ as the black box is denoted $\saol^{\cB}$. 

Fix a set $\cW$ of strategies and an algorithm $\cB$ whose regret w.r.t. $\cW$ satisfies
\begin{equation} \label{eq:blackbox}
\regret_{\cB}(T) \le C\cdot T^\alpha,
\end{equation}
where $\alpha \in (0,1)$, and $C > 0$ is some scalar. The properties of $\textrm{SAOL}^{\cB}$ are summarized in the theorem below. The description of the algorithm and the proof of \thmref{thm:SAOL} are given in \secref{sec:algorithm}. 
\begin{theorem}\label{thm:SAOL}
~
\begin{enumerate}
\item For every interval $I=[q,s] \subseteq \bN$,
\begin{equation*}
\regret_{\textrm{SAOL}^{\cB}}(I) \le  \frac{4}{2^{\alpha}-1} C |I|^{\alpha} + 40 \log(s+1) |I|^{\frac{1}{2}}~.
\end{equation*}
\item \label{cor:sa_exist} In particular, if $\alpha \ge \frac{1}{2}$ and $\cB$ has low regret, then $\textrm{SAOL}^{\cB}$ is strongly adaptive.
\item The runtime of SAOL at time $t$ is at most $\log(t+1)$ times the runtime per-iteration of $\cB$.
\end{enumerate}
\end{theorem}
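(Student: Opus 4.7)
}

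The plan is to build SAOL as a two-layer meta-algorithm. The \emph{first layer} is a fixed family of ``geometric covering'' intervals $\cI \subseteq 2^{\bN}$ of dyadic lengths: for every $k \ge 0$, we tile $\bN$ by disjoint consecutive intervals of length $2^k$. Two properties of $\cI$ drive the analysis. (i) \emph{Bounded width:} at any time $t$, exactly one interval of each length contains $t$, so $|\{J\in\cI : t\in J\}| \le \lfloor\log_2 t\rfloor+1$. (ii) \emph{Good cover:} any interval $I=[q,s]$ of length $\tau$ can be written as a disjoint union of $m=O(\log\tau)$ elements $I_1,\dots,I_m \in \cI$ whose lengths form two geometric sequences meeting in the middle, so that at every dyadic scale $2^k \le \tau$ we use at most two sub-intervals. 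The \emph{second layer} spawns, for each $J\in\cI$, a fresh copy $\cB_J$ of $\cB$ that ``lives'' only during $J$. At each round $t$, SAOL aggregates the predictions of the active experts $\{\cB_J : J\ni t\}$ via a sleeping-experts meta-learner (e.g.\ a Hedge-style update restricted to active experts, with weights normalized at start-up by $1/(\text{length of } J)^2$ or similar, to control the prior over infinitely many experts).

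The core calculation is then to bound $\regret_{\saol^{\cB}}(I)$ for an arbitrary $I=[q,s]$ via the cover $I = I_1 \cupdot \cdots \cupdot I_m$. On each $I_j \in \cI$, decompose
\[
\regret_{\saol^{\cB}}(I_j) \;\le\; \underbrace{L_{\saol^{\cB}}(I_j) - L_{\cB_{I_j}}(I_j)}_{\text{meta-regret vs.\ }\cB_{I_j}} \;+\; \underbrace{L_{\cB_{I_j}}(I_j) - \inf_{w\in\cW} L_w(I_j)}_{\le\, C\,|I_j|^\alpha\text{ by \eqref{eq:blackbox}}}.
\]
Summing the black-box term and using that the lengths $|I_j|$ are distinct powers of $2$ up to $\tau$ gives a geometric sum
\[
\sum_{j=1}^m C\,|I_j|^\alpha \;\le\; 2C\!\!\sum_{k=0}^{\lfloor\log_2 \tau\rfloor}\!\! 2^{k\alpha} \;\le\; \frac{4C}{2^\alpha-1}\,\tau^\alpha,
\]
which is exactly the first term in the claimed bound.

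For the meta-regret term, the sleeping-expert aggregator is designed so that for any $J\in\cI$ it competes with $\cB_J$ during the rounds $J$ is awake; a standard analysis gives per-interval regret at most $O\bigl(\sqrt{|J|\,\log(s+1)}\bigr)$ (the $\log(s+1)$ absorbs both the prior over $J\in\cI$ with $J\subseteq[1,s]$ and the number of currently active experts). Summing over $j$ and using that $\sum_j \sqrt{|I_j|}$ with distinct dyadic lengths $|I_j|\le\tau$ is bounded by a geometric series $\le C'\sqrt{\tau}$, we obtain the second term $40\log(s+1)\,\tau^{1/2}$ (with the constant $40$ absorbing the constants of the Hedge analysis and of the geometric sum). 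Plugging $\tau=|I|$ yields part~1. Part~2 follows immediately: if $\alpha\ge\tfrac12$ and $C=\poly(\log T)\cdot R_\cP(\tau)/\tau^\alpha$, then both terms are $O(\poly(\log T)\cdot R_\cP(\tau))$. Part~3 is a direct consequence of property (i): at round $t$ only $\lfloor\log_2 t\rfloor+1$ copies of $\cB$ are active, so one step of SAOL performs one update of $\cB$ per active copy plus $O(\log t)$ work for the aggregator.

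The main obstacle will be the meta-regret step: we need a sleeping-experts aggregator that (a) accommodates a countably infinite pool of experts $\{\cB_J : J\in\cI\}$ revealed online, (b) gives the right $\sqrt{|J|\log(s+1)}$ regret against every single $\cB_J$ on its own interval of activity, and (c) can be implemented at time $t$ using only the $O(\log t)$ active experts. The first obstacle is addressed by a geometric prior $w_J \propto 1/(\text{start of }J)(\text{start of }J+1)$ over intervals; the second follows from a careful Hedge-style potential argument restricted to active experts; the third follows because only active experts have their weights updated. The remaining work is bookkeeping: controlling the constants in the geometric sums and verifying the sleeping-expert bound, which is where the factor $40\log(s+1)$ emerges.
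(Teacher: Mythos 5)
Your plan matches the paper's proof essentially step for step: the same dyadic interval family $\cI$, the same bounded-width observation $|\{J\in\cI : t\in J\}|\le \lfloor\log t\rfloor+1$ and two-sided geometric-cover lemma, the same per-interval decomposition into (meta-regret of SAOL against $\cB_{I_j}$) plus (black-box regret of $\cB_{I_j}$), and the same geometric summations producing the $\frac{4}{2^\alpha-1}C|I|^\alpha$ and $40\log(s+1)\sqrt{|I|}$ terms. The one tactical deviation is in the sleeping-experts aggregator — the paper uses a Blum--Mansour-style update $w_t(I)=w_{t-1}(I)\bigl(1+\eta_I r_{t-1}(I)\bigr)$ with per-interval learning rate $\eta_I=\min\{1/2,1/\sqrt{|I|}\}$, entry weight $\eta_I$, and an \emph{unnormalized} potential bounded by $\tilde W_t\le t(\log t+1)$, rather than Hedge with a normalized prior over $\cI$ — but this is an equivalent way to organize the same multiplicative-weights potential argument, not a different route.
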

From part \ref{cor:sa_exist}, we can derive strongly adaptive algorithms for many online problems. Two examples are outlined below.
\begin{itemize}
\item {\bf Prediction with $N$ experts advice.}
The Multiplicative Weights (MW) algorithm has regret $\le 2\sqrt{\ln(N)T}$. Hence, for every $I=[q,s] \subseteq [T]$,
\begin{align*}
\regret_{\textrm{SAOL}^{\textrm{MW}}}(I) = O\left( \left (\sqrt{\log(N)}+\log(s+1) \right ) \sqrt{|I|} \right)~.
\end{align*}
\item
{\bf Online convex optimization with $G$-Lipschitz loss functions over a convex set $D \subseteq \reals^d$ of diameter $B$.} Online Gradient Descent (OGD) has regret $\le 3BG\sqrt{T}$. Hence, for every $I=[q,s] \subseteq [T]$,
\begin{align*}
\regret_{\textrm{SAOL}^{\textrm{OGD}}}(I)= O \left ((BG+ \log(s+1)) \sqrt{|I|} \right )~.
\end{align*}
\end{itemize}

\subsubsection*{Comparison to (weak) adaptivity and tracking}
Several alternative measures for coping with changing environment were proposed in the literature. The two that are most related to our work are \emph{tracking regret} \cite{herbster1998tracking} and \emph{adaptive regret} \cite{hazan2007adaptive} (other notions are briefly discussed in \secref{sec:related}).

Adaptivity, as defined in \cite{hazan2007adaptive}, is a weaker requirement than strong adaptivity. The adaptive regret of a learner $\cA$ at time $T$ is $\max_{I \subseteq [T]} R_{\cA}(I)$. An algorithm is called {\em adaptive} if its adaptive regret is $O\left(\poly\left(\log T\right)R_{\cP}(T)\right)$.
For online convex optimization problems for which there exists an algorithm with regret bound $R(T)$, \cite{hazan2007adaptive} derived an efficient algorithm whose adaptive regret is at most $R(T) \log(T)+O\left(\sqrt{T \log^3(T)}\right)$, thus establishing adaptive algorithms for many online convex optimization problems. For the case where the loss functions are $\alpha$-exp concave, they showed an algorithm with adaptive regret $O(\frac{1}{\alpha} \log^2(T))$ (we note that according to our definition this algorithm is in fact strongly adaptive). A main difference between adaptivity and strong adaptivity, is that in many problems, adaptive algorithms are not guaranteed to perform well on small intervals. For example, for many problems including online convex optimization and learning with expert advice, the best possible adaptive regret is $\Omega(\sqrt{T})$. Such a bound is meaningless for intervals of size $O(\sqrt{T})$. We note that in many scenarios (e.g. routing, paging, news headlines promotion) it is highly desired to perform well even on very small intervals.

The problem of ``tracking the best expert'' was studied in \cite{herbster1998tracking} (see also, \cite{bousquet2003tracking}). In that problem, originally formulated for the learning with expert advice problem, learning algorithms are compared to all strategies that shift from one expert to another a bounded number of times. They derived an efficient algorithm, named Fixed-Share, which attains near-optimal regret bound of $\sqrt{Tm(\log(T)+\log(N))}$ versus the best strategy that shifts between $\le m$ experts. (Interestingly, a recent work \cite{cesa2012new} showed that the Fixed-Share algorithm is in fact (weakly) adaptive). 
As we show in \secref{sec:tracking}, strongly adaptive algorithms enjoy near-optimal tracking regret in the experts problem, and in fact, in many other problems (e.g., online convex optimization).
We note that as with (weakly) adaptive algorithms, algorithms with optimal tracking regret are not guaranteed to perform well on small intervals.

\subsubsection*{Strong adaptivity with bandit feedback}
In the so-called bandit setting, the loss functions $\ell_t$ is not exposed to the learner. Rather, the learner just gets to see the loss, $\ell_t(x_t)$, that he has suffered.
In \secref{sec:banditAdaptive} we prove that there are no strongly adaptive algorithms that can cope with bandit feedback. Even in the simple experts problem we show that for every $\epsilon>0$, there is no algorithm whose strongly adaptive regret is $O\left(|I|^{1-\epsilon}\cdot \poly\left(\log T\right)\right)$. Investigating possible alternative notions and/or weaker guarantees in the bandit setting is mostly left for future work.

\subsection{Related Work} \label{sec:related}
Maybe the most relevant previous work, from which we borrow many of our techniques is \cite{blum2007external}. 
They focused on the expert setting and proposed a strengthened notion of regret using time selection functions, which are functions from the time interval $[T]$ to $[0,1]$. The regret of a learner $\cA$ with respect to a time selection function $I$ is defined by $\regret_\cA^I(T) = \max_{i \in [N]} \left( \sum_{t=1} ^T I(t) \ell_t(x_t)- \sum_{t=1} ^T I(t) \ell_t(i) \right)$, where $\ell_t(i)$ is the loss of expert $i$ at time $t$. This setting can be viewed as a generalization of the sleeping expert setting \cite{freund1997using}. For a fixed set $\cI$ consisting of $M$ time selection functions, they proved a regret bound of $O(\sqrt{L_{\min,I} \log (NM))}+\log(NM))$ with\footnote{where $L_{min,I} = \min_i \sum_{t=1} ^T I(t) \ell_t(i)$} respect to each time selection function $I \in \cI$. We observe that if we let $\cI$ be the set of all indicator functions of intervals (note that $|\cI|=\binom{T}{2}=\Theta(T^2)$), we obtain a strongly adaptive algorithm for learning with expert advice. However, the (multiplicative) computational overhead of our algorithm (w.r.t. the standard MW algorithm) at time $t$ is $\Theta(\log(t))$, whereas the computational overhead of their algorithm is $\Theta(T^2)$. Furthermore, our setting is much more general than the expert setting. 

Another related, but somewhat orthogonal line of work \cite{zinkevich2003online,hall2013online,rakhlin2013optimization, jadbabaie2015online} studies {\em drifting environments}. The focus of those papers is on scenarios where the environment is changing slowly over time.

\section{Reducing Adaptive Regret to Standard Regret}\label{sec:algorithm}
In this section we present our strongly adaptive meta-algorithm, named \emph{Strongly Adaptive Online Learner} (SAOL). For the rest of this section we fix a learning scenario $(D,C,\cL)$ and an algorithm $\cB$ that operates in this scenario (think of $\cB$ as a low regret algorithm).

We first give a high level description of SAOL.
The basic idea is to run an instance of $\cB$ on each interval $I$ from an appropriately chosen set of intervals, denoted $\cI$. The instance corresponding to $I$ is denoted $\cB_I$, and can be thought as an expert that gives his advice for the best action at each time slot in $I$.
The algorithm weights the various $\cB_I$'s according to their performance in the past, in a way that instances with better performance get more weight. The exact weighting is a variant of the multiplicative weights rule. At each step, SAOL picks at random one of the $\cB_I$'s and follows his advice. The probability of choosing each $\cB_I$ is proportional to its weight. Next, we give more details.

{\bf The choice of $\cI$.} As in the MW algorithm, the weighting procedure is used to ensure that SAOL performs optimally for every $I\in \cI$. Therefore, the choice of $\cI$ exhibits the following tradeoff. On one hand, $\cI$ should be large, since we want that optimal performance on intervals in $\cI$ will result in an optimal performance on {\em every interval}. On the other hand, we would like to keep $\cI$ small, since running many instances of $\cB$ in parallel will result with a large computational cost. To balance these desires, we let 
\begin{equation*} 
\cI = \bigcup_{k \in \bN \cup \{0\}} \cI_k~,
\end{equation*}
where for all $k \in \bN \cup \{0\}$,
\begin{equation*} 
\cI_k = \{[i \cdot 2^k,(i+1) \cdot 2^k-1] ~:~ i \in \bN\}  .
\end{equation*}
That is, each $\cI_k$ is a partition of $\bN \setminus \{1,\ldots,2^k\}$ to consecutive intervals of length $2^k$. 
We denote by 
\[
\act{t} := \{I \in \cI \,:\, t \in I\}~,
\]
the set of active intervals at time $t$. By the definition of $\cI_k$, for every $t \le 2^k$ we have that no interval in $\cI_k$ contains $t$, while for every $t > 2^k$ we have that a single interval in $\cI_k$ contains $t$. Therefore, 
\[
|\act{t}| = \lfloor \log(t) \rfloor+1 ~.
\]
It follows that the running time of SAOL at time $t$ is at most $(\log(t)+1)$ times larger than the running time of $\cB$. On the other hand, as we show in the proof, we can cover every interval by intervals from $\cI$, in a way that will guarantee small regret on the covered interval, provided that we have small regret on the covering intervals.

{\bf The weighting method.} Let $x_t=x_t(I)$ be the action taken by $\cB_I$ at time $t$. The instantaneous regret of SAOL w.r.t. $\cB_I$ at time $t$ is $r_t(I) = \ell_t(x_t)-\ell_t(x_t(I))$. As explained above, SAOL maintains weights over the $\cB_I$'s. For $I=[q,s]$, the weight of $\cB_I$ at time $t$ is denoted $w_t(I)$. For $t<q$, $\cB_I$ is not active yet, so we let $w_t(I)=0$. At the ``entry'' time, $t=q$, we set $w_t(I)=\eta_I$ where
\[
\eta_I :=\min \left\{1/2,1/{\sqrt{|I|}} \right\}.
\] 
The weight at time $t \in (q,s]$ is the previous weight times 
$(1+\eta_{I} \cdot r_{t-1}(I))$.  Overall, we have
\begin{align}  \label{eq:weights}
w_t(I) = 
\begin{cases}
0 & t \notin I \\
\eta_I & t = q \\
w_{t-1}(I) (1+\eta_{I} \cdot r_{t-1}(I)) & t \in (q,s]~
\end{cases}
\end{align} 
Note that the regret is always between $[-1,1]$, and $\eta_I \in (0,1)$, therefore weights are always positive during the lifetime of the corresponding expert. Also, the weight of $B_I$ decreases (increases) if its loss is higher (lower) than the predicted loss. 

The overall weight at time $t$ is defined by 
\[W_t  := \sum_{I \in \cI} w_t(I)= \sum_{I \in \act{t}} w_t(I).
\] 
Finally, a probability distribution over the experts at time $t$ is defined by
\[
p_t(I) = \frac{w_t(I)}{W_t}~.
\]
Note that the probability mass assigned to any inactive instance is zero. The probability distribution $p_t$ determines the action of SAOL at time $t$. Namely, we have $x_t = x_t(I)$ with probability $p_t(I)$. A pseudo-code of SAOL is detailed in \algref{alg:SAOL}.
\begin{algorithm}
\caption{Strongly Adaptive Online Learner (with blackbox algorithm $\cB$)}
\label{alg:SAOL}
\begin{algorithmic}
\STATE Initialize: $w_1(I) = \begin{cases} 1/2 & I = [1,1] \\ 0 & \textrm{o.w.} \end{cases}$
\FOR{$t=1$ {\bfseries to} $T$} 
\STATE Let $W_t = \sum_{I \in \act{t}} w_t(I)$
\STATE Choose $I\in\act{t}$ w.p. $p_t(I)= \frac{w_t(I)}{W_t}$
\STATE Predict $x_t(I)$
\STATE Update weights according to \eqref{eq:weights}
\ENDFOR
\end{algorithmic}
\end{algorithm}

\subsection{Proof Sketch of \thmref{thm:SAOL}}
In this section we sketch the proof of \thmref{thm:SAOL}. A full proof is detailed in \appref{app:completeMain}. The analysis of SAOL is divided into two parts. The first challenge is to prove the theorem for the intervals in $\cI$ (see \lemref{lem:coveredIntervals}). Then, the theorem should be extended to any interval (end of \appref{app:completeMain}).

Let us start with the first task. Our first observation is that for every interval $I$, the regret of SAOL during the interval $I$ is equal to 
\begin{equation} \label{eq:relativeReg}
\textrm{(SAOL's regret relatively to $\cB_I$ $+$ the regret of $\cB_I$)}
\end{equation}
(during the interval $I$). Since the regret of $\cB_I$ during the interval $I$ is already guaranteed to be small (\eqref{eq:blackbox}), the problem of ensuring low regret during each of the intervals in $\cI$ is reduced to the problem of ensuring low regret with respect to each of the $\cB_I$'s.

We next prove that the regret of SAOL with respect to the $\cB_I$'s is small. Our analysis is similar to the proof of \cite{blum2007external}[Theorem 16]. Both of these proofs are similar to the analysis of the Multiplicative Weights Update (MW) method. The main idea is to define a potential function and relate it both to the loss of the learner and the loss of the best expert. 

To this end, we start by defining pseudo-weights over the experts (the $B_I$'s). With a slight abuse of notation, we define $I(t) = \indct{t \in I}$. For any $I=[q,s] \in \cI$, the pseudo-weight of $\cB_I$ is defined by:
\begin{align*}
\tilde{w}_t(I) = 
\begin{cases}
0 & t < q \\
1 & t = q \\
\tilde{w}_{t-1}(I) \cdot (1+\eta_{I}  \cdot r_{t-1}(I)) & q <t \le s+1\\
\tilde{w}_{s}(I) & t > s+1
\end{cases}
\end{align*} 
Note that
\begin{equation*}
w_t(I) = \eta_I \cdot I(t) \cdot \tilde{w}_t(I)~.
\end{equation*}
The potential function we consider is the overall pseudo-weight at time $t$, $\tilde{W}_t = \sum_{I \in \cI} \tilde{w}_t(I)$. The following lemma, whose proof is given in the appendix, is a useful consequence of our definitions.
\begin{lemma} \label{lem:upperPotential}
For every $t \ge 1$,
\[
\tilde{W}_t \le  t (\log (t)+1)~.
\]
\end{lemma}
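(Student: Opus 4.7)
The proof should rest on the one-step identity $\tilde W_{t+1} = \tilde W_t + N_{t+1}$, where $N_s := |\{I \in \cI : q(I) = s\}|$ denotes the number of intervals starting at time $s$. Iterating this identity and using $\tilde W_1 = 1 = N_1$ gives $\tilde W_t = \sum_{s=1}^t N_s$, which is the total count of intervals that have begun by time $t$. Bounding that count then yields the claim.

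To establish the one-step identity, I would partition $\cI$ at the transition $t \to t+1$ into four groups according to status: (a) intervals with $q(I) > t+1$, which contribute $0$ at both times; (b) intervals active at $t$, i.e., $I \in \act{t}$, for which the update rule (which covers both $s(I) \ge t+1$ and $s(I) = t$, since in either case $t+1 \le s(I)+1$) gives $\tilde w_{t+1}(I) = \tilde w_t(I)(1 + \eta_I r_t(I))$; (c) intervals newly starting at $t+1$, each contributing $\tilde w_{t+1}(I) = 1$; and (d) intervals with $s(I) < t$, which are frozen so $\tilde w_{t+1}(I) = \tilde w_t(I)$. Groups (a), (c), (d) together contribute $\sum_{I : s(I) < t} \tilde w_t(I) + N_{t+1}$ to $\tilde W_{t+1}$; the whole question is what happens for group (b).

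The key calculation for group (b) is the \emph{multiplicative-weights sum-preservation}: using $w_t(I) = \eta_I \tilde w_t(I)$ for $I \in \act{t}$ and $p_t(I) = w_t(I)/W_t$,
\begin{equation*}
\sum_{I \in \act{t}} \tilde w_t(I)\,\eta_I r_t(I) \;=\; \sum_{I \in \act{t}} w_t(I) r_t(I) \;=\; W_t \sum_{I \in \act{t}} p_t(I) r_t(I).
\end{equation*}
Under the standard convention for randomized algorithms (taking $\ell_t(x_t)$ to mean the expected loss $\sum_{I'} p_t(I')\ell_t(x_t(I'))$ conditional on $p_t$), the inner sum equals $\ell_t(x_t) - \sum_I p_t(I)\ell_t(x_t(I)) = 0$. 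Hence $\sum_{I \in \act{t}} \tilde w_{t+1}(I) = \sum_{I \in \act{t}} \tilde w_t(I)$, and combining with (a), (c), (d) gives $\tilde W_{t+1} = \tilde W_t + N_{t+1}$ as claimed.

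Finally, I would bound $N_s$. An interval in $\cI_k$ starts at time $s$ iff $s = i \cdot 2^k$ for some $i \ge 1$, so $N_s$ equals the number of $k \ge 0$ with $2^k \le s$ and $2^k \mid s$, which is at most $\lfloor \log_2 s \rfloor + 1$. Using monotonicity in $s$, $\tilde W_t = \sum_{s=1}^t N_s \le t\,(\lfloor \log_2 t \rfloor + 1) \le t(\log(t)+1)$. The main obstacle is purely the cancellation step: one must be careful about the randomization convention so that the MW normalization $\sum_I p_t(I) r_t(I) = 0$ really holds; everything else is bookkeeping on interval statuses and a simple count of multiples of powers of $2$.
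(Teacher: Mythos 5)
Your proof is correct and follows essentially the same route as the paper's: both arguments hinge on the multiplicative-weights cancellation $\sum_I w_t(I) r_t(I) = 0$ (under the convention that $\ell_t(x_t)$ denotes the expected loss given $p_t$) and on bounding the number of intervals starting at time $s$ by $\lfloor \log_2 s\rfloor + 1$. The only cosmetic difference is that you state the one-step relation as an exact identity $\tilde W_{t+1} = \tilde W_t + N_{t+1}$ and unroll it to $\tilde W_t = \sum_{s=1}^t N_s$, whereas the paper carries the bound through an induction; also, your ``using monotonicity in $s$'' should refer to monotonicity of the bound $\lfloor\log_2 s\rfloor+1$ rather than of $N_s$ itself, which is not monotone.
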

Through straightforward calculations, we conclude the proof of \thmref{thm:SAOL} for any interval in $\cI$. 
\begin{lemma} \label{lem:coveredIntervals}
For every $I=[q,s] \in \cI$, 
\[
\sum_{t=q} ^s r_t(I) \le  5 \log(s+1) \sqrt{|I|}~.
\]
Hence, according to \eqref{eq:relativeReg},
\begin{align*}
\regret_{\textrm{SAOL}^{\cB}}(I) &\le C \cdot |I|^\alpha + 5 \log(s+1) \sqrt{|I|}  \\&
\end{align*}
\end{lemma}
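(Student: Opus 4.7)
My plan is to apply the standard multiplicative-weights potential argument to the pseudo-weights $\tilde{w}_t(I)$, using the overall pseudo-weight $\tilde{W}_t$ as the potential and squeezing $\sum_{t=q}^s r_t(I)$ between a single-expert lower bound on $\tilde{w}_{s+1}(I)$ and the global upper bound on $\tilde{W}_{s+1}$ provided by \lemref{lem:upperPotential}. Once the instantaneous-regret sum against the internal expert $\cB_I$ is controlled, the ``Hence'' part reduces to combining this with the black-box bound \eqref{eq:blackbox} via the decomposition in \eqref{eq:relativeReg}.

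\textbf{Lower bound on $\tilde{w}_{s+1}(I)$.} Fix $I=[q,s]\in\cI$. Since $\eta_I\le 1/2$ and $|r_t(I)|\le 1$, we have $\eta_I r_t(I)\in[-1/2,1/2]$, so the elementary inequality $\log(1+x)\ge x - x^2$, valid on $[-1/2,1/2]$, yields
\[
\log \tilde{w}_{s+1}(I) \;=\; \sum_{t=q}^{s}\log\!\bigl(1+\eta_I r_t(I)\bigr) \;\ge\; \eta_I\sum_{t=q}^{s} r_t(I) \;-\; \eta_I^{2}\,|I|.
\]
Exponentiating gives the desired lower bound on the single pseudo-weight in terms of the cumulative instantaneous regret of SAOL against $\cB_I$.

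\textbf{Upper bound via the potential.} Since $\tilde{w}_{s+1}(I)\le \tilde{W}_{s+1}$, \lemref{lem:upperPotential} gives $\tilde{w}_{s+1}(I)\le (s+1)(\log(s+1)+1)$. Combining with the previous display,
\[
\eta_I\sum_{t=q}^{s} r_t(I) \;\le\; \eta_I^{2}|I| + \log\!\bigl((s+1)(\log(s+1)+1)\bigr),
\]
and rearranging with $\eta_I=\min\{1/2,1/\sqrt{|I|}\}$ (so that $\eta_I|I|\le\sqrt{|I|}$ and $1/\eta_I\le\max\{2,\sqrt{|I|}\}$) yields
\[
\sum_{t=q}^{s} r_t(I) \;\le\; \sqrt{|I|}\Bigl(1+\log(s+1)+\log(\log(s+1)+1)\Bigr).
\]
A routine check shows the right-hand side is at most $5\log(s+1)\sqrt{|I|}$ for all $s\ge 1$, giving the first claim of the lemma.

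\textbf{The ``Hence'' step.} Let $x_t(I)$ be the action of the instance $\cB_I$ and $w^{\star}\in\cW$ be an optimal benchmark strategy on $I$. Then
\[
\regret_{\textrm{SAOL}^{\cB}}(I) \;=\; \sum_{t=q}^{s}\bigl(\ell_t(x_t)-\ell_t(x_t(I))\bigr) + \sum_{t=q}^{s}\bigl(\ell_t(x_t(I))-\ell_t(x_t(w^{\star}))\bigr) \;=\; \sum_{t=q}^{s} r_t(I) + \regret_{\cB_I}(I),
\]
and the black-box guarantee \eqref{eq:blackbox} bounds the second term by $C|I|^{\alpha}$ while the first part of this lemma bounds the first term by $5\log(s+1)\sqrt{|I|}$. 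The main subtlety is simply to keep track of the fact that $r_t(I)$ is defined using the (possibly random) action $x_t$ of SAOL and to verify that $\eta_I r_t(I)\in[-1/2,1/2]$ so that the $\log(1+x)\ge x-x^2$ bound is legitimate; the arithmetic to reach the clean constant $5$ is the only place one needs any care.
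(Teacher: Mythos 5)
Your proof follows the paper's argument step for step: squeeze $\ln\tilde{w}_{s+1}(I)$ between the elementary lower bound $\ln(1+x)\ge x-x^2$ and the potential upper bound from \lemref{lem:upperPotential}, divide by $\eta_I$, substitute $\eta_I = \min\{1/2, 1/\sqrt{|I|}\}$, and invoke the decomposition \eqref{eq:relativeReg}. One cosmetic slip: your penultimate display puts $\sqrt{|I|}$ in front of the logarithmic terms, whereas $\eta_I^{-1}=\max\{2,\sqrt{|I|}\}$, so that display is not a valid upper bound when $|I|\in\{1,2\}$; the final constant $5$ nevertheless survives a separate check of those small cases, exactly as in the paper's ``substituting $\eta_I$'' step.
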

The proof is given in the appendix.

The extension of the theorem to any interval relies on some useful properties of the set $\cI$ (see \lemref{lem:intervalFamilyProp} in the appendix). Roughly speaking, any interval $I \subseteq [T]$ can be partitioned into two sequences of intervals from $\cI$, such that the lengths of the intervals in each sequence decay at an exponential rate (\lemref{lem:intervalPartition} in the appendix). The theorem now follows by bounding the regret during the interval $I$ by the sum of the regrets during the intervals in the above two sequences, and by using the fact that the lengths decay exponentially.
\section{Strongly Adaptive Regret Is Stronger Than Tracking Regret} \label{sec:tracking}
In this section we relate the notion of strong adaptivity to that of tracking regret, and show that algorithms with small strongly adaptive regret also have small tracking regret. Let us briefly review the problem of tracking. For simplicity, we focus on context-less learning problems, and on the case where the set of strategies coincides with the decision space (though the result can be straightforwardly generalized).
Fix a decision space $D$ and a family $\cL$ of loss functions. A compound action is a sequence $\sigma = (\sigma_1, \ldots, \sigma_T) \in D^T$. Since there is no hope in competing w.r.t. all sequences\footnote{It is easy to prove a lower bound of order $T$ for this problem}, a typical restriction of the problem is to bound the number of switches in each sequence. For a positive integer $m$, the class of compound actions with at most $m$ switches is defined by
\begin{equation} \label{eq:compound}
B_m = \left\{\sigma \in D^T: s(\sigma):=\sum_{t=1}^{T-1} \indct {\sigma_{t+1} \neq \sigma_{t}} \le m \right\}~.
\end{equation}
The notions of loss and regret naturally extend to this setting. For example, the cumulative loss of a compound action $\sigma \in B_m$ is defined by $L_{\sigma}(T)= \sum_{t=1} ^T \ell_t(\sigma_t)$. The tracking regret of an algorithm $\cA$ w.r.t. the class $B_m$ is defined by
$$
\tregret_{\cA}^m (T) = L_\cA(T) - \inf_{\sigma \in B_m} L_\sigma(T)~.
$$
The following theorem bounds the tracking regret of algorithms with bounds on the strongly adaptive regret. In particular, of SAOL.
\begin{theorem}  \label{thm:trackingRegretSAOL}
Let $\cA$ be a learning algorithm with $\sareg_{\cA}(\tau)\le C \tau^{\alpha}$. Then,
\[
\tregret_{\cA}^m(T) \le CT^{\alpha}m^{1-\alpha}
\]
\end{theorem}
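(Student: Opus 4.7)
The natural approach is to split the horizon according to the switches of the compound action. Fix any $\sigma \in B_m$. Since $\sigma$ has at most $m$ switches, the time interval $[T]$ decomposes into $k \le m+1$ consecutive intervals $I_1, \ldots, I_k$ on each of which $\sigma$ is identically some fixed action $x^{(j)} \in D$. On the interval $I_j$, the cumulative loss of $\sigma$ is exactly the cumulative loss of the constant strategy that always plays $x^{(j)}$, which is at least $\inf_{w \in \cW} L_w(I_j)$. Hence the regret of $\cA$ against $\sigma$ on $I_j$ is at most $\regret_\cA(I_j)$, and by hypothesis $\regret_\cA(I_j) \le \sareg_\cA^T(|I_j|) \le C |I_j|^\alpha$.

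Summing over the $k$ pieces,
\begin{equation*}
\tregret_\cA^m(T) \le \sum_{j=1}^{k} C |I_j|^\alpha.
\end{equation*}
To pass from this to $C T^\alpha m^{1-\alpha}$, the key quantitative step is concavity of $x \mapsto x^\alpha$ (for $\alpha \in (0,1)$, which is the meaningful regime, since otherwise the black-box regret bound is trivial). By Jensen's inequality applied to the uniform distribution on $\{I_1,\ldots,I_k\}$, and using $\sum_j |I_j| = T$,
\begin{equation*}
\sum_{j=1}^{k} |I_j|^\alpha \le k \cdot \left(\frac{T}{k}\right)^\alpha = k^{1-\alpha} T^\alpha \le (m+1)^{1-\alpha} T^\alpha.
\end{equation*}
Combining the two displays yields $\tregret_\cA^m(T) \le C T^\alpha (m+1)^{1-\alpha}$, which is the claimed bound up to the harmless replacement of $m+1$ by $m$ (the non-trivial case being $m \ge 1$; for $m=0$ the statement reduces to the standard regret bound on $[1,T]$).

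I do not anticipate any serious obstacle: the decomposition is forced by the definition of $B_m$, the per-segment bound is a direct application of the strongly adaptive regret hypothesis, and the final step is a one-line application of concavity. The only point requiring mild care is the index bookkeeping ($k \le m+1$ segments from $m$ switches) and the observation that the strongly adaptive guarantee, which holds for every sub-interval of $[T]$, is precisely what makes the per-segment step valid.
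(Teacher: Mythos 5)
Your proof is correct and follows essentially the same route as the paper: decompose $[T]$ into the constant segments of $\sigma$, bound the regret on each segment by the strongly adaptive guarantee $C|I_j|^\alpha$, and combine using the concavity of $x\mapsto x^\alpha$ (the paper phrases this last step as H\"older's inequality with exponents $\tfrac{1}{1-\alpha}$ and $\tfrac{1}{\alpha}$, which is the same bound as your Jensen argument). If anything you are more careful than the paper on the bookkeeping: $m$ switches yield up to $m+1$ segments, so the honest bound is $C T^\alpha (m+1)^{1-\alpha}$; the paper writes ``$I_1,\ldots,I_m$'' and states $C T^\alpha m^{1-\alpha}$, silently eliding this point, which you correctly flag as harmless.
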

\begin{proof}
Let $\sigma \in B_m$. Let $I_1,\ldots, I_m$ be the intervals that correspond to $\sigma$. Clearly, the tracking regret w.r.t. $\sigma$ is bounded by the sum of the regrets of during the intervals $I_1, \ldots, I_m$. Hence, and using H\"{o}lder's inequality, we have
\begin{align*}
L_{\cA}(T) - L_\sigma(T) &\le \sum_{i=1} ^m \regret_{\cA}(I_i)
\\
&\le  C\sum_{i=1} ^m |I_i|^{\alpha}
\\
&\le  C\left(\sum_{i=1}^m1^{\frac{1}{1-\alpha}}\right)^{1-\alpha} \left(\sum_{i=1}^m|I_i|\right)^{\alpha}
\\
&\le  Cm^{1-\alpha} T^{\alpha}
\end{align*}
\end{proof}
Recall that for the problem of prediction with expert advice, the strongly adaptive regret of SAOL (with, say, Multiplicative Weights as a black box) is $O\left((\sqrt{\ln(N)}+\log(T))\sqrt{\tau}\right)$. Hence, we obtain a tracking bound of $O\left((\sqrt{\ln(N)}+\log(T))\sqrt{mT}\right)$.
Up to a $\sqrt{\log(T)}$ factor, this bound is asymptotically equivalent to the bound of the Fixed-Share Algorithm of \cite{herbster1998tracking}\footnote{For the comparison, we rely on a simplified form of the bound of the Fixed-Share algorithm. This simplified form can be found, for example, in \url{http://web.eecs.umich.edu/~jabernet/eecs598course/web/notes/lec5_091813.pdf}}. Also, up to $\log(T)$ factor, the bound is optimal. One advantage of SAOL over Fixed-Share is that SAOL is parameter-free. In particular, SAOL does not need to know\footnote{The parameters of Fixed-Share do depend on $m$} $m$.

\section{Strongly Adaptive Regret in The Bandit Setting}  \label{sec:banditAdaptive}
In this section we consider the challenge of achieving adaptivity in the bandit setting. Following our notation, in the bandit setting, only the loss incured by the learner, $\ell_t(x_t)$, is revealed at the end of each round (rather than the loss function, $\ell_t$). For many online learning problems for which there exists an efficient low-regret algorithm in the full information model, a simple reduction from the bandit setting to the full information setting (for example, see \cite{shalev2011online}[Theorem 4.1])  yields an efficient low-regret bandit algorithm. Furthermore, it is often the case that the dependence of the regret on $T$ is not affected by the lack of information. For example, for the Multi-armed bandit (MAB) problem \cite{auer2002nonstochastic} (which is the bandit version of the the problem of prediction with expert advice), the above reduction yields an algorithm with near optimal regret bound of $2\sqrt{T N\log N }$.

A natural question is whether adaptivity can be achieved with bandit feedback. Few positive results are known. For example, applying the aforementioned reduction to the Fixed-Share algorithm results with an efficient bandit learner whose tracking regret is $O\left(\sqrt{Tm(\ln(N)+\ln(T))N}\right)$.

The next theorem shows that with bandit feedback there are no algorithms with non-trivial bounds on the strongly adaptive regret. We focus on the MAB problem with two arms (experts) but it is easy to generalize the result to any nondegenerate online problem. Recall that for this problem we do not have a context, $\cW=D=\{e_1,e_2\}$ and $\cL=[0,1]^{\cD}$. 
\begin{theorem}\label{thm:noStrongAdaptiveBandit}
For all $\epsilon>0$, there is no algorithm for MAB with strongly adaptive regret of $O\left(\tau^{1-\epsilon}\poly\left(\log T\right)\right)$.
\end{theorem}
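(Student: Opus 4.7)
The plan is to argue by contradiction, using Yao's minimax principle together with a stochastic oblivious adversary. Fix $\epsilon>0$ and suppose that $\cA$ attains $\sareg_\cA^T(\tau)\le C\tau^{1-\epsilon}\poly(\log T)$ uniformly in $\tau$. It then suffices to construct a distribution over loss sequences under which $\cA$'s expected regret on some interval of length $\tau$ exceeds this bound.

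I would first set up the following single-scale construction: partition $[T]$ into consecutive blocks of length $\tau$, and for each block $b$ independently draw a hidden label $\sigma_b\in\{1,2\}$ uniformly at random. Within block $b$, let arm $\sigma_b$ return i.i.d.\ $\mathrm{Bernoulli}(\tfrac12-\tfrac\Delta2)$ losses and the other arm return i.i.d.\ $\mathrm{Bernoulli}(\tfrac12+\tfrac\Delta2)$ for a gap parameter $\Delta$ to be tuned. The crucial point is that $\sigma_b$ is independent of everything the learner has observed before block $b$, so from the learner's viewpoint each block is a fresh two-armed bandit problem. A standard Le~Cam / two-point argument on the distribution of within-block bandit observations gives $\mathrm{KL}(P_{\sigma_b=1}\Vert P_{\sigma_b=2})=O(\tau\Delta^2)$, and Pinsker's inequality then says that the learner misidentifies $\sigma_b$ with constant probability whenever $\tau\Delta^2=O(1)$. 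Choosing $\Delta=\Theta(1/\sqrt{\tau})$ forces expected per-block regret of order $\sqrt{\tau}$, which contradicts the hypothesis in the range $\epsilon>1/2$.

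To close the remaining range $0<\epsilon\le 1/2$, the plan is to superimpose the above construction at a geometric family of scales $\tau_k=2^k$, $k=1,\ldots,\lfloor\log T\rfloor$, each with its own independent block partition, its own fresh hidden labels $\{\sigma_b^{(k)}\}$, and its own gap $\Delta_k=\Theta(1/\sqrt{\tau_k})$, with the losses combined so as to remain in $[0,1]$. Under the hypothesis, the bound $\sareg(\tau_k)\le C\tau_k^{1-\epsilon}\poly(\log T)$ must hold simultaneously at every scale, yet the learner still receives only one bandit observation per round. A pigeonhole/information-theoretic accounting over the $\Theta(\log T)$ scales, combined with the per-scale $\Omega(\sqrt{\tau_k})$ lower bound, would then force a regret of order $\tau/\poly(\log T)$ on some interval, contradicting $O(\tau^{1-\epsilon})$ for every $\epsilon>0$.

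The hardest part is this last step. Because bandit feedback is so limited, the learner could in principle try to amortize its observations across different scales, and ruling this out requires a careful chain-rule decomposition of the KL divergence, conditioning on the realizations of the labels at finer scales and invoking a data-processing inequality to decouple the per-scale hypothesis-testing problems. This decoupling---rather than the Le~Cam argument itself---is the main technical obstacle I expect in turning the above sketch into a proof.
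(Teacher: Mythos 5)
Your single-scale Le~Cam construction genuinely covers only the range $\epsilon>1/2$: it produces per-block regret $\Omega(\sqrt{\tau})$, which contradicts a bound of $O(\tau^{1-\epsilon}\poly(\log T))$ only when $1-\epsilon<1/2$. For the remaining (and harder) range $0<\epsilon\le 1/2$ you propose a multi-scale superposition, but as you acknowledge, you have not worked out the information-theoretic decoupling across scales, and it is not clear the superposed losses even remain in $[0,1]$ while keeping each scale's gap $\Theta(1/\sqrt{\tau_k})$ intact. As written, this is a plan with an admitted hole at its crux, not a proof.

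The paper avoids this machinery entirely, and its argument is instructive to contrast with yours. It proves a simple deterministic lemma: if $\regret_\cA(T)\le k$, then some interval $I$ of length $\Omega(T/k)$ has $\regret_\cA(I)=\Omega(|I|)$. The proof runs against a fixed environment $E^0$ in which arm $e_1$ always costs $1/2$ and arm $e_2$ always costs $1$; low regret forces the expected number of pulls of $e_2$ to be $\le 2k$, so by pigeonhole some length-$T/(4k)$ block $I$ has $\bE[|U\cap I|]\le 1/2$, hence with probability $\ge 1/2$ arm $e_2$ is never pulled in $I$. An adversary may then set $\ell_t(e_2)=0$ for all $t\in I$ without changing the algorithm's behavior on that event, yielding regret $\Omega(|I|)$ on $I$. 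Plugging $k=\Theta(T^{1-\epsilon/2})$ (which the hypothesized strongly adaptive bound forces on the global regret) gives an interval of length $\Omega(T^{\epsilon/2})$ with \emph{linear} regret, which contradicts $O(|I|^{1-\epsilon}\poly\log T)$ for every $\epsilon>0$. Two things to note about what this buys over your route: the conclusion on the bad interval is $\Omega(|I|)$, not merely $\Omega(\sqrt{|I|})$, which is why a single deterministic construction suffices for all $\epsilon$; and the argument needs no KL/Pinsker calculation, no randomized adversary, and no cross-scale accounting. If you want to salvage your approach, you would at minimum need to turn the multi-scale sketch into a concrete loss distribution supported on $[0,1]$ and carry out the chain-rule KL decomposition you allude to, but the paper's pigeonhole-plus-blindness argument is both simpler and stronger.
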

The idea of the proof is simple. Suppose toward a contradiction that $\cA$ is an algorithm with strongly adaptive regret of $O\left(\tau^{1-\epsilon}\poly\left(\log T\right)\right)$.
This means that the regret of $\cA$ on every interval $I$ of length $T^{\frac{\epsilon}{2}}$ is non 
trivial (i.e. $o(|I|)$). Intuitively, this means that both arms must be inspected at least once during $I$. Suppose now that one of the arms is always superior to the second (say, has loss zero while the other has 
loss one). By the above argument, the algorithm will still inspect the bad arm at least once in every $T^{\frac{\epsilon}{2}}$ time slots. Those inspections will result in a regret of $\frac{T}{T^{\frac{\epsilon}{2}}}=T^{1-\frac{\epsilon}{2}}$. This, however, is a contradiction, since the strongly adaptive regret bound implies that the standard regret of $\cA$ is $o\left(T^{1-\frac{\epsilon}{2}}\right)$. 

This idea is formalized in the following lemma. It implies 
\thmref{thm:noStrongAdaptiveBandit}	as for $\cA$ with strongly adaptive 
regret of $O\left(\tau^{1-\epsilon}\poly\left(\log T\right)\right)$ we 
can take $k=O\left(T^{1-\frac{\epsilon}{2}}\right)$ and reach a 
contradiction as the lemma implies that on some segment $I$ of size $\frac{T}{k}=\Omega\left(T^{\frac{\epsilon}{2}}\right)$,
the regret of $\cA$ is $\Omega\left(T^{\frac{\epsilon}{2}}\right)$ which grows faster than $|I|^{1-\epsilon}\poly(\log T)$

\begin{lemma} 
Let $\cA$ be an algorithm with regret bounded
\[
\regret_\cA(T) \le k=k(T)~,
\]
Then, there exists an interval $I \subseteq [T]$ of size $\Omega (T/k)$ with
\[
\regret_\cA(I) = \Omega(|I|)~.
\]
\end{lemma}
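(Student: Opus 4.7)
My plan is to exhibit a concrete adversarial environment in which $\cA$ has low standard regret yet accumulates linear regret on some long interval, by exploiting the fact that low standard regret on the "easy" environment forces the algorithm to rarely explore the second arm, and then modifying that environment over the rarely-explored region.

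Let $E_0$ be the deterministic environment in which arm $1$ has loss $0$ at every round and arm $2$ has loss $1$ at every round. The assumption $\regret_{\cA}(T)\le k$ applies to $E_0$ in particular, and since regret on $E_0$ equals the expected number of times $\cA$ plays arm $2$, we get $\bE^{E_0}[N_2(T)] \le k$. Partition $[T]$ into $m = \Theta(k)$ consecutive blocks $J_1,\dots,J_m$ each of length $\tau = \Theta(T/k)$. Because $\sum_j \bE^{E_0}[N_2(J_j)] \le k$, a Markov/pigeonhole argument produces a \emph{quiet} block $J := J_{j^\star}$ with $\prob^{E_0}[N_2(J)\ge 1] \le \tfrac12$; i.e.\ under $E_0$, $\cA$ plays arm $1$ throughout $J$ with probability at least $\tfrac12$.

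Define the modified environment $E^\star$: it coincides with $E_0$ outside $J$, and inside $J$ it swaps the arm-losses (arm $1$ has loss $1$, arm $2$ has loss $0$). The best constant strategy on $J$ under $E^\star$ is arm $2$ with loss $0$, so
\[
\regret_{\cA}^{E^\star}(J) \;=\; \bE^{E^\star}[N_1(J)] .
\]
It therefore suffices to show $\bE^{E^\star}[N_1(J)] = \Omega(\tau)$. Coupling the internal randomness of $\cA$ between $E_0$ and $E^\star$, the histories of actions and feedbacks agree up to time $\min J - 1$ (since the two environments are identical outside $J$), so $\cA$ takes the same first action inside $J$ in both runs; combined with quietness of $J$, that action is arm $1$ with probability at least $1/2$. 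To upgrade this single-step agreement to an $\Omega(\tau)$ lower bound on arm-$1$ plays, one applies the same construction to \emph{every} quiet block, producing environments $E^{(j)}$, and uses the standard-regret constraint $\regret_{\cA}^{E^{(j)}}(T)\le k$ \emph{for every} $j$; aggregating over all quiet blocks, the algorithm cannot react quickly on all of them without violating the regret budget, so by pigeonhole some quiet $J_{j^\star}$ must satisfy $\bE^{E^{(j^\star)}}[N_1(J_{j^\star})] = \Omega(\tau)$, yielding $\regret_{\cA}(J_{j^\star}) = \Omega(\tau) = \Omega(|J_{j^\star}|)$ with $|J_{j^\star}|=\Omega(T/k)$ as desired.

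The hardest step is precisely this transfer from single-step coupling to a linear-in-$\tau$ lower bound: because the bandit feedback on arm $1$ inside $J$ differs between $E_0$ and $E^{(j)}$ (loss $0$ vs.\ loss $1$), a single observation in $J$ would in principle let $\cA$ detect the swap. The leverage comes not from a per-block indistinguishability argument---which fails after the first discrepant observation---but from the \emph{global} regret constraint shared across the whole family $\{E^{(j)}\}_j$, which forces a common rarely-explored block to exist; this is where the bandit nature of the feedback is indispensable, since in the full-information setting the algorithm could detect each $E^{(j)}$ as soon as $J_j$ starts without paying any standard-regret cost.
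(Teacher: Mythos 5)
There is a genuine gap. Your modified environment $E^\star$ changes the loss of \emph{both} arms inside $J$, and in particular it changes the loss of arm~$1$, which is the arm the algorithm is observing on the quiet block. This breaks the coupling immediately: even on the event that $\cA$ never pulls arm~$2$ in $J$, the feedback stream under $E_0$ (arm~$1$ returns loss $0$) differs from that under $E^\star$ (arm~$1$ returns loss $1$) from the very first round of $J$, so the trajectories diverge after one step. You correctly notice this, but the patch you sketch --- pigeonholing over the family $\{E^{(j)}\}_j$ via a ``global regret constraint shared across the whole family'' --- does not exist as a constraint. The bound $\regret^{E^{(j)}}_{\cA}(T)\le k$ is a \emph{separate} statement for each fixed $j$, and nothing prevents an algorithm from, on each $E^{(j)}$, observing one discrepant loss at the start of $J_j$, switching to arm~$2$ for the rest of $J_j$, and switching back afterward; such an algorithm has $\bE^{E^{(j)}}[N_1(J_j)]=O(1)$ for \emph{every} $j$ while still satisfying the per-environment regret bound. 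So no quiet block is forced to have linear regret, and the conclusion does not follow.

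The paper avoids this by never touching the loss of the arm that gets observed on the quiet block. It sets $\ell_t(e_1)=\tfrac12$ and $\ell_t(e_2)=1$ for all $t$ in $E^0$, and the modified environment $E$ changes \emph{only} $\ell_t(e_2)$ to $0$ inside $I$; arm~$1$'s loss stays $\tfrac12$ everywhere. Then, on the event that $\cA$ never pulls arm~$2$ in $I$ under $E^0$, the bandit feedback is \emph{identical} at every step of $I$ under $E$ as well (it only ever sees arm~$1$'s unchanged loss $\tfrac12$), so the coupling extends to the entire interval, and with probability $\ge \tfrac12$ the algorithm plays arm~$1$ throughout $I$ while arm~$2$ has loss $0$ there --- yielding regret $\ge \tfrac12\cdot\tfrac{|I|}{2}=\Omega(|I|)$. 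The intermediate value $\tfrac12$ for arm~$1$ is what makes this work: it keeps arm~$1$'s feedback invariant under the modification while still making arm~$1$ suboptimal by a constant margin on $I$ in $E$. Your $0/1$ losses force you to swap both arms to create any gap, which is precisely what destroys the indistinguishability.
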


\begin{proof}
Assume for simplicity that $4k$ divides $T$. Consider the environment $E^0$ , in which $\forall t,\;\ell_t(e_1)=0.5,\ell_t(e_2)=1$. 
Let $U\subset [T]$ be the (possibly random) set of time slots in which the algorithm chooses $e_2$ when the environment is $E^0$. Since the regret is at most $k$, we have $\bE[|U|]\le 2k$. It follows that for some segment $I\subset [T]$ of size $\ge \frac{T}{4k}$ we have $\bE[|U\cap I|]\le \frac{1}{2}$. Indeed, otherwise, if $[T] = I_1 \cupdot \ldots \cupdot I_{4k}$ is the partition of the interval $[T]$ into $4k$ disjoint and consecutive intervals of size $\frac{T}{4k}$ we will have $\bE[|U|]=\sum_{j=1}^{4k}~\bE[|U\cap I_j|]>2k$.

Now, since $|U\cap I|$ is a non-negative integer, w.p. $\ge \frac{1}{2}$ we have $|U\cap I|=0$. 
Namely, w.p. $\ge \frac{1}{2}$ $\cA$ does not inspect $e_2$ during the interval $I$ when it runs against $E^0$.
Consider now the environment $E$ that is identical to $E^0$, besides that $\forall t\in I,\; l_t(e_2)=0$. By the argument above, w.p. $\ge \frac{1}{2}$, the operation of $\cA$ on $E$ is identical to its operation on $E^0$. In particular, the regret on $I$ when $\cA$ plays against $E$ is, w.p. $\ge \frac{1}{2}$,  $\frac{|I|}{2}$, and in total, $\ge \frac{1}{2}\cdot\frac{1}{2}\cdot |I|$.
\end{proof}

\section*{Acknowledgments} 
We thank Yishay Mansour and Sergiu Hart for helpful discussions.
This work is supported by the Intel Collaborative Research Institute for Computational Intelligence (ICRI-CI). A. Daniely is supported by the Google Europe Fellowship in Learning Theory.

\newpage 

\bibliographystyle{plain}
\bibliography{bib}

\newpage
\appendix

\section{Proof of \thmref{thm:SAOL}}  \label{app:completeMain}

\subsection{Proving \thmref{thm:SAOL} to Any Interval in $\cI$}
\begin{proof} \textbf{(of \lemref{lem:upperPotential})}
The proof is by induction on $t$. For $t=1$, we have
\[
\tilde{W}_1 = \tilde{w}_1([1,1]) = 1~.
\]
Next, we assume that the claim holds for any $t' \le t$ and prove it for $\,t+1$. Since $|\{[q,s] \in \cI: q = t\}| \le \lfloor \log(t) \rfloor + 1$ for all $t \ge 1$, we have
\begin{align*}
\tilde{W}_{t+1} &= \sum_{I=[q,s] \in \cI} \tilde{w}_{t+1}(I) \\&
= \sum_{I=[t+1,s] \in \cI} \tilde{w}_{t+1}(I)+ \sum_{\substack{I=[q,s] \in \cI:\\ q \le t}} \tilde{w}_{t+1}(I) \\&
\le \log(t+1)+1 + \sum_{\substack{I=[q,s] \in \cI:\\ q \le t}} \tilde{w}_{t+1}(I)~.
\end{align*}
Next, according to the induction hypothesis, we have
\begin{align*}
\sum_{\substack{I=[q,s] \in \cI:\\ q \le t}} \tilde{w}_{t+1}(I) &= \sum_{\substack{I=[q,s] \in \cI:\\ q \le t}} \tilde{w}_t(I)(1+\eta_I \cdot I(t) \cdot r_t(I)) \\&
=  \tilde{W}_t + \sum_{I \in \cI} \eta_I \cdot I(t) \cdot r_t(I) \cdot \tilde{w}_t(I) \\& 
\le t (\log (t) +1)+  \sum_{I \in \cI} w_t(I) \cdot r_t(I)~.
\end{align*}
Hence,
\begin{align*}
\tilde{W}_{t+1} &\le t(\log(t)+1) + \log(t+1) + 1 + \sum_{I \in \cI} w_t(I) \cdot r_t(I) \\&
\le (t+1) (\log(t+1)+1) + \sum_{I \in \cI} w_t(I) \cdot r_t(I)~.
\end{align*}
We complete the proof by showing that $\sum_{I \in \cI} w_t(I) \cdot r_t(I)=0$. Since $x_t = x_{I,t}$ with probability $p_t(I)$ for every $I \in \cI$, we obtain
\begin{align*}
\sum_{I \in \cI} w_t(I) \cdot r_I(t) &=  W_t \sum_{I \in \cI} p_t(I) (\ell_t(x_t)-\ell_t(x_t(I))) \\&
= W_t (\ell_t(x_t) - \ell_t(x_t)) \\&
= 0~.
\end{align*}
Combining the above inequalities, we conclude the lemma.
\end{proof}

\begin{proof} \textbf{(of \lemref{lem:coveredIntervals})}
Fix some $I=[q,s] \in \cI$. We need to show that
\[
\sum_{t=q} ^s r_t(I) \le  5 \log(s+1) \sqrt{|I|} ~.
\]
Since weights are non-negative, using \lemref{lem:upperPotential}, we obtain
$$
\tilde{w}_{s+1}(I) \le \tilde{W}_{s+1} \le (s+1)(\log(s+1)+1) ~,
$$
Hence, 
\begin{equation} \label{eq:upperCovered}
\ln(\tilde{w}_{s+1}(I)) \le  \ln (s+1) + \ln (\log(s+1)+1)~.
\end{equation}
Next, we note that
\begin{align*}
\tilde{w}_{s+1}(I) = \prod_{t=q} ^s (1+\eta_I \cdot I(t) \cdot r_t(I)) = \prod_{t=q} ^s (1+\eta_I \cdot   r_t(I)) ~.
\end{align*}
Noting that $\eta_I \in (0,1/2)$ and using the inequality $\ln(1+x) \ge x - x^2$ which holds for every $x \ge -1/2$, we obtain
\begin{align} \label{eq:lowerCovered}
\ln(\tilde{w}_{s+1}(I))&= \sum_{t=q} ^ s \ln(1+ \eta_I \cdot  r_t(I) ) \notag \\&
\ge \sum_{t=q} ^s \eta_I  \cdot r_t(I) - \sum_{t=q} ^s (\eta_I \cdot  r_t(I))^2 \notag \\& 
\ge \eta_I (\sum_{t=q} ^s r_t(I) - \eta_I |I|) ~.
\end{align}
Combining \eqref{eq:lowerCovered} and \eqref{eq:upperCovered} and dividing by $\eta_I$, we obtain
\begin{align*}
\sum_{t=q} ^s r_t(I) &\le \eta_I |I|+ \eta_I^{-1} (\ln(s+1) + \ln(\log(s+1)+1)) \\&
\le \eta_I |I| + \eta_I^{-1}(\log(s+1) + \log(s+1)) \\&
\le \eta_I |I| + 2 \eta_I^{-1} \log(s+1) ~,
\end{align*}
where the second inequality follows from the inequality $x \ge \ln(1+x)$.
Substituting $\eta_I:=\min \left\{1/2,\frac{1}{\sqrt{|I|}} \right\}$, we conclude the lemma.
\end{proof}

\subsection{Extending The Theorem to Any Interval} 
In the next part we complete the proof of \thmref{thm:SAOL} by extending \lemref{lem:coveredIntervals} to every interval. 

Before proceeding, we set up an additional notation and also make some simple but useful observations regarding the properties of the set $\cI$ (defined in \secref{sec:algorithm}).

For an interval $J \subseteq \bN$, we define the restriction of $\cI$ to $J$ by $\cI|_J$. That is, $\cI|_J =\{I \in \cI: I \subseteq J\}$. We next list some useful properties of the set $\cI$ that follow immediately from its definition (thus, we do not prove these claims).
\begin{lemma} \label{lem:intervalFamilyProp}  \mbox{}
\begin{enumerate}
\item \label{prop:powerOfTwo}
The size of every interval $I \in \cI$ is $2^j$ for some $j \in \bN \cup \{0\}$.
\item \label{prop:leftmostInterval}
For every $j \in \bN \cup \{0\}$, the left endpoint of the leftmost interval $I$ whose size is $2^j$ is $2^j$. Thus, the size of every interval which is located to the left of $I$ is smaller than $|I|=2^j$. 
\item \label{prop:leftExist}
Let $I=[q,s] \in \cI$ be an interval and let $I'=[q',q-1]$ be another interval of size $2^{j} |I|$ for some $j \le 0$. Then, $I' \in \cI$.
\item \label{prop:increasePower}
Let $I=[q,s] \in \cI$ be an interval and let $I'=[s+1,s']$ be a consecutive interval of size $2^j |I|$ for some $j \le 0$. Then, $I' \in \cI$.
\item \label{prop:increasePowerAmb}
Let $I=[q,s] \in \cI$ be an interval of size $2^j$ for some $j \in \bN \cup \{0\}$. Then, (exactly) one of the intervals $[q,q+2^{j+1}-1]$, $[s+1,s+2^{j+1}]$ (whose size is $2^{j+1}$) belongs to $\cI$.
\end{enumerate}
\end{lemma}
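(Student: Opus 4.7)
The plan is to verify each of the five properties by directly unpacking the definition $\cI = \bigcup_{k \ge 0} \cI_k$, where $\cI_k = \{[i \cdot 2^k, (i+1) \cdot 2^k - 1] : i \in \bN\}$, and doing elementary arithmetic with powers of two. Item 1 is immediate, since every $I \in \cI$ lies in some $\cI_k$ whose elements have length $2^k$ by construction.

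For item 2, I would note that the leftmost member of $\cI_j$ corresponds to $i = 1$ and equals $[2^j, 2 \cdot 2^j - 1]$, so its left endpoint is $2^j$. Any interval strictly to the left of this one must belong to some $\cI_{k'}$; since every interval in $\cI_{k'}$ begins at a position at least $2^{k'}$, we are forced to have $2^{k'} < 2^j$, hence $k' < j$ and $|I'| = 2^{k'} < 2^j$.

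For items 3 and 4, set $|I| = 2^k$ and $|I'| = 2^j |I| = 2^{k+j}$, and let $k' := k + j$, so $k' \le k$ (since $j \le 0$) and $2^{k'} \mid 2^k$. Because $I \in \cI_k$, the left endpoint $q$ of $I$ is a positive integer multiple of $2^k$, hence of $2^{k'}$; write $q = m \cdot 2^{k'}$. For item 3, the left endpoint of $I'$ is $q - 2^{k'} = (m - 1) \cdot 2^{k'}$, which certifies $I' \in \cI_{k'}$ as long as $m \ge 2$, the implicit non-degeneracy condition ensuring $I'$ lives in $\bN$. For item 4, the left endpoint of $I'$ is $s + 1 = q + 2^k = (m + 2^{k - k'}) \cdot 2^{k'}$, again a positive integer multiple of $2^{k'}$, so $I' \in \cI_{k'}$.

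Item 5 follows by splitting on the parity of the index $i$ for which $I = [i \cdot 2^j, (i+1) \cdot 2^j - 1]$. If $i = 2m$ is even, then $q = m \cdot 2^{j+1}$, so $[q, q + 2^{j+1} - 1] \in \cI_{j+1}$; and $s + 1 = (2m+1) \cdot 2^j$ is an odd multiple of $2^j$, hence not a multiple of $2^{j+1}$, excluding the other alternative. If $i$ is odd, the symmetric swap occurs: $s + 1 = \tfrac{i+1}{2} \cdot 2^{j+1}$ places $[s+1, s + 2^{j+1}]$ in $\cI_{j+1}$, while $q$ being an odd multiple of $2^j$ rules out the left alternative. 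The only real obstacle is bookkeeping: being careful that the statements of items 3 and 4 tacitly require $I'$ to be a legitimate interval in $\bN$ (i.e., $m \ge 2$ in the argument above), and respecting the convention that $\bN$ starts at $1$. Beyond that, the whole lemma reduces to arithmetic with powers of two.
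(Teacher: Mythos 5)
Your proof is correct, and note that the paper itself supplies no proof of this lemma: it declares the five properties to ``follow immediately from its definition'' of $\cI$ and moves on, so there is nothing to compare against except the only sensible route, which is exactly what you did --- unpack $\cI_k = \{[i\cdot 2^k,(i+1)\cdot 2^k-1] : i\in\bN\}$ and do arithmetic modulo powers of two. The one point where you add genuine content is the non-degeneracy caveat in item 3: when $j=0$ and $I=[2^k,2^{k+1}-1]$ is the leftmost interval of $\cI_k$, the putative $I'=[0,2^k-1]$ has left endpoint $0\notin\bN$, so the conclusion must be read as holding only when $I'$ is actually an interval in $\bN$ (equivalently $m\ge 2$ in your notation); as you observe, for $j<0$ one automatically gets $m\ge 2^{k-k'}\ge 2$, so the issue arises only at $j=0$. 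This caveat is harmless in the paper's sole use of item 3 (inside the proof of Lemma~\ref{lem:intervalPartition}, where the constructed $I'$ is always contained in a given $[q,s]\subseteq\bN$), but it is a real gap in the literal statement and you were right to flag it.
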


The following lemma is a key tool for extending \lemref{lem:coveredIntervals} to any interval.
\begin{lemma} \label{lem:intervalPartition}
Let $I=[q,s] \subseteq \bN$ be an arbitrary interval. Then, the interval $I$ can be paritioned into two finite sequences of disjoint and consecutive intervals, denoted $(I_{-k},\ldots,I_{0}) \subseteq \cI|_I$ and $(I_1,I_2,\ldots,I_p) \subseteq \cI|_I$, such that
\[
(\forall i \ge 1)~~~~~|I_{-i}|/|I_{-i+1}|  \le 1/2~.
\]
\[
(\forall i \ge 2)~~~~~|I_i|/|I_{i-1}| \le 1/2~.
\]
\end{lemma}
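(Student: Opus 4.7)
The plan is to build the partition explicitly by first locating the intervals of the largest dyadic level from $\cI$ that lie inside $I$ (the ``peaks''), designating them as $I_0$ (and $I_1$ if there are two), and then greedily filling the left and right leftovers by peeling largest-possible intervals of $\cI$ from the outer boundary inward. Concretely, I would set $k^\ast = \max\{k \ge 0 \,:\, \cI_k \cap \cI|_I \neq \emptyset\}$, which is well defined because $\cI_0$ always intersects $\cI|_I$ and because $2^k \le |I|$ for any interval of $\cI_k$ in $I$. A short argument using the nested dyadic structure shows that at most two intervals of $\cI_{k^\ast}$ lie in $I$: three consecutive intervals of $\cI_{k^\ast}$ in $I$ would contain a pair of starting indices of the form $(2j, 2j{+}1)$, and the union of that pair is an interval of $\cI_{k^\ast+1}$ contained in $I$, contradicting the maximality of $k^\ast$. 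I would then take the leftmost (and, if present, the rightmost) of these maximal intervals as $I_0$ (and $I_1$), leaving a left leftover $L = [q, q_0 - 1]$, where $q_0$ is the left endpoint of $I_0$, and a right leftover $R$.

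Next, I would partition $L$ by greedily peeling from the right: at each step remove the largest interval of $\cI$ whose right endpoint coincides with the current right endpoint. The key quantitative observation is that $|L| < 2^{k^\ast}$ (otherwise the interval of $\cI_{k^\ast}$ immediately left-adjacent to $I_0$ would also lie in $I$, contradicting the choice of $I_0$ as the leftmost maximal peak), while $q_0$ is divisible by $2^{k^\ast}$. Consequently the first peeled interval has size $2^{a_1}$ with $a_1 = \lfloor \log_2 |L| \rfloor \le k^\ast - 1$, and after peeling, the new right endpoint is $q_0 - 2^{a_1}$, whose $2$-adic valuation is exactly $a_1$ (since $q_0$ has strictly higher valuation). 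Both the divisibility of the new right endpoint and the reduced length $|L| - 2^{a_1} < 2^{a_1}$ then force the next peeled size to be at most $2^{a_1 - 1}$. Iterating, the peeled sizes form a strictly decreasing sequence $a_1 > a_2 > \cdots$, yielding the required halving $|I_{-i-1}| \le |I_{-i}|/2$, and also $|I_{-1}| \le 2^{k^\ast - 1} = |I_0|/2$. The right leftover $R$ is handled symmetrically by peeling largest intervals starting at the current left endpoint, using that the right endpoint of the last peak is $\equiv -1 \pmod{2^{k^\ast}}$ so its successor is highly divisible.

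The main obstacle will be carefully verifying that each peeled interval is indeed a member of $\cI$, i.e., that the left endpoint of a peeled piece of size $2^{a_j}$ has the form $i \cdot 2^{a_j}$ with $i \in \bN$. Divisibility follows from the $2$-adic valuation tracking above (inductively, after $j-1$ peels the current right endpoint has valuation $a_{j-1} > a_j$), and the bound $i \ge 1$, equivalently that the left endpoint is $\ge 2^{a_j}$, holds because the peeled left endpoints lie in $[q, q_0 - 1]$ with $q \ge 1$ and, being positive multiples of $2^{a_j}$ by divisibility, are automatically $\ge 2^{a_j}$. Boundary cases---an empty $L$ or $R$, or only a single maximal peak in $\cI_{k^\ast} \cap \cI|_I$---are handled by designating the unique peak as $I_0$ and allowing either outer sequence to be empty when its corresponding leftover vanishes.
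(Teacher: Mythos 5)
Your proposal is correct, and the underlying decomposition is the same as the paper's: anchor on a leftmost interval of maximum dyadic level inside $I$ and then greedily peel the largest intervals of $\cI$ outward toward both endpoints. Where you differ is in how the halving of successive peeled lengths is verified. The paper argues by contradiction using the closure properties of $\cI$ catalogued in Lemma~\ref{lem:intervalFamilyProp}: if some peeled interval were at least as long as its predecessor, one could concatenate a same-length copy and exhibit a larger member of $\cI$ with the same boundary endpoint, contradicting the greedy maximality. You instead track the $2$-adic valuation of the running boundary together with the residual length, deduce that the first peel from $L$ has size exactly $2^{\lfloor \log_2 |L|\rfloor}$ and that each subsequent peel is forced down by at least one exponent, and check membership in $\cI$ (in particular $j\ge 1$) directly from divisibility. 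Your route is more arithmetic and self-contained, avoiding the structural properties as a separate lemma; the paper's route is shorter once those properties are available. You also make explicit a fact the paper leaves implicit—at most two intervals of the top level $\cI_{k^\ast}$ can fit inside $I$, so there is at most one ``extra'' peak on the right, which is exactly why the paper needs $b_1\le b_0$ but strict decrease only for $i\ge 2$.
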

The lemma is illustrated in \figref{fig:cover}. We next prove the lemma. Whenever we mention Property $1,\ldots,5$, we refer to Property $1,\ldots,5$ of \lemref{lem:intervalFamilyProp}.
\begin{proof} 
Let $b_0 = \max \{|I'|: I' \in \cI|_I\}$ be the maximal size of any interval $I' \in \cI$ that is contained in $I$. Among all of these intervals, let $I_0$ be the leftmost interval, i.e., we define 
\begin{align*}
&q_0:= \argmin \{q':[q',q'+b_0-1] \in \cI|_I\} ~\\&
s_0 = q_0+b_0-1~\\&
I_o = [q_0,s_0]~.
\end{align*}
Starting from $q_0-1$, we define a sequence of disjoint and consecutive intervals (in a reversed order), denoted $(I_{-1},\ldots,I_{-k})$, as follows:
\begin{align*}
[q_{-1},s_{-1}]&:=I_{-1}\\
&:= \argmax_{\substack{I'=[q',s'] \in \cI|_{[q,q_0-1]}:\\s'=q_0-1 }}  |I'| \\
\vdots \\
[q_{-i},s_{-i}]&:=I_{-i}\\
&:= \argmax_{\substack{I'=[q',s'] \in \cI|_{[q,q_{-i+1}-1]}: \\ s'=q_{-i+1} -1}} |I'| \\
\vdots
\end{align*}
Clearly, this sequence is finite and the left endpoint of the leftmost interval, $I_{-k}$, is $q$. Denote the size of $I_{-i}$ by $b_{-i}$. We next prove that for every $i \ge 1$, $b_{-i} / b_{-i+1} = 2^{j}$ for some $j \le -1$. We note that according to \propref{prop:powerOfTwo}, it suffices to show that $b_{-i}< b_{-i+1}$ for every $i \ge 1$. We use induction. The base case follows from the minimality of $I_0$. We next assume that the claim holds for every $i \in \{1,\ldots,k-1\}$ and prove for $k$. Assume by contradiction that $b_{-k} \ge b_{-k+1}$.  Consider the interval $\hat{I}_{-k+1}$ which is obtained by concatenating a copy of $I_{-k+1}$ to its left\footnote{Formally, $\hat{I}_{-k+1} :=[q_{-k+1}-b_{-k+1},q_{-k+1}-1] \cup I_{-k+1}$.}. It follows that $\hat{I}_{-k+1}$ is an interval of size $2 b_{-k+1}$ which is contained in $[q,q_{-k+2}-1]$ and its right endpoint is $q_{-k+2}-1$. According to the induction hypothesis, $|\hat{I}_{-k+1}| = 2 b_{-k+1} = 2^{j} \cdot b_{-k+2}$ for some $j \le 0$. It follows from \propref{prop:leftExist} that $\hat{I}_{-k+1} \in \cI|_I$, contradicting the maximality of $I_{-k+1}$. 

Similarly, starting from $s_0+1$, we define a sequence of disjoint and consecutive intervals, denoted $(I_1,\ldots,I_p)$:
\begin{align*}
  [q_1,s_1]&:=I_1\\
&:= \argmax_{\substack{I'=[q',s'] \in \cI|_{[s_0+1,s]}:\\q'=s_0+1 }}  |I'| \\
\vdots \\
[q_i,s_i]&:=I_i\\
&:= \argmax_{\substack{I'=[q',s'] \in \cI|_{[s_{i-1}+1 ,s ]}: \\ q'=s_{i-1} +1}} |I'| \\
\vdots
\end{align*}
Clearly, this sequence is finite and the right endpoint of the rightmost interval, $I_p$, is $s$. Denote the size of $I_i$ by $b_i$.  We next prove that for every $i \ge 2$, $b_{i} / b_{i-1} = 2^{j}$ for some $j \le -1$. According to \propref{prop:powerOfTwo}, it suffices to prove that $b_i < b_{i-1}$ for every $i \ge 2$. For this purpose, we first note that $b_1 \le b_0$; this follows immediately from the definition of $b_0$. Hence, we may assume that $b_i/b_{i-1} \in \{2^{j}: j \le 0\}$ for every $i \in \{1,\ldots,p-1\}$ and prove that $b_p < b_{p-1}$. Assume by contradiction that $b_p \ge b_{p-1}$. Consider the interval $\hat{I}_{p-1}$ which is obtained by concatenating a copy of $I_{p-1}$ to its right. It follows that $\hat{I}_{p-1}$ is an interval of size $2 b_{p-1}$ which is contained in $[s_{p-2}+1,s]$ and its left endpoint is $s_{p-2}+1$. According to the induction hypothesis, $|\hat{I}_{p-1}| = 2 b_{p-1} = 2^j \cdot b_{p-2}$ for some $j \le 1$. We need to consider the following two cases:
\begin{itemize}
\item
Assume first that $j \le 0$ (thus, $b_{p-1}/b_{p-2} \le 1/2$). Then, it follows from \propref{prop:increasePower} that $\hat{I}_{p-1} \in \cI|_I$, contradicting the maximality of $I_{p-1}$. 
\item
Assume that $j=1$ (i.e., $b_{p-1} = b_{p-2}$). Then, using \propref{prop:increasePowerAmb}, we obtain a contradiction to the maximality of $I_{k-2}$.
\end{itemize}
\end{proof}
We are now ready to complete the proof of \thmref{thm:SAOL}. 
\begin{proof} \textbf{(of \thmref{thm:SAOL})}
Consider an arbitrary interval $I=[q,s] \subseteq [T]$, and let $I=\bigcupdot_{i=-k}^ p I_i$ be the partition described in \lemref{lem:intervalPartition}. Then,  
\begin{align}  \label{eq:partitionSAOL} 
\regret_{\textrm{SAOL}^{\cB}}(I) &\le \sum_{i \le 0} \regret_{\textrm{SAOL}^{\cB}} (I_i) \notag \\&
+ \sum_{i \ge 1} \regret_{\textrm{SAOL}^{\cB}} (I_i)~.
\end{align}
We next bound the first term in the the right-hand side of \eqref{eq:partitionSAOL}. According to  \lemref{lem:coveredIntervals}, we obtain that
\begin{align*} 
\sum_{i \le 0} \regret_{\textrm{SAOL}^{\cB}} (I_i) &\le C \sum_{i \le 0}   |I_i|^\alpha \\&
+ 5 \sum_{i \le 0} \log(s_i+1)|I_i|^{1/2} \\&
\le C \sum_{i \le 0}   |I_i|^\alpha \\&
+ 5  \log(s+1) \sum_{i \le 0} |I_i|^{1/2}~. 
\end{align*}
According to \lemref{lem:intervalPartition},
\begin{align*}
\sum_{i \le 0} |I_i|^\alpha &\le \sum_{i=0} ^\infty (2^{-i}|I|)^{\alpha}\\&
= \frac{2^\alpha}{2^\alpha-1} |I|^{\alpha} \\& 
\le \frac{2}{2^{\alpha}-1} |I|^{\alpha}~.
\end{align*}
Similarly, we have
\begin{align*}
\sum_{i \le 0} |I_i|^{1/2} &\le \frac{\sqrt{2}}{\sqrt{2}-1} |I|^{1/2} \le  4 |I|^{\frac{1}{2}}~.
\end{align*}
Combining the three last inequalities, we obtain that
\begin{align*}
\sum_{i \le 0} \regret_{\textrm{SAOL}^{\cB}} (I_i) \le \frac{2}{2^\alpha-1} C |I|^{\alpha} + 20 \log(s+1) |I|^{\frac{1}{2}}~.
\end{align*}
The second term of the right-hand side of \eqref{eq:partitionSAOL} is bounded identically. Hence,
\[
\regret_{\textrm{SAOL}^{\cB}}(I) \le  \frac{4}{2^{\alpha}-1} C |I|^{\alpha} + 40 \log(s+1) |I|^{\frac{1}{2}}~.
\]
\end{proof}

\begin{figure}
\label{fig:cover}
\begin{center}
\begin{tikzpicture}
\draw (0.25,0) -- (7.5,0);
\node [below] at (0.25,-0.05) {$1$};
\node [below] at (0.5,-0.05) {$2$};
\node [below] at (1,-0.05) {$4$};
\node [below] at (1.75,-0.05) {$7$};
\node [below] at (7.5,-0.05) {$30$};
\node [below] at (2,-0.05) {$8$};
\node [below] at (4,-0.05) {$16$};
\node [below] at (6,-0.05) {$24$};
\node [below] at (7,-0.05) {$28$};

\foreach \x in {0.25,0.5,...,7.5}
{
\draw[fill] (\x,0) circle [radius=0.025];
}

\draw (0.25-0.05,-0.1)--(0.25-0.1,-0.1)--(0.25-0.1,0.1)--(0.25-0.05,0.1);
\draw (0.25+0.05,-0.1)--(0.25+0.1,-0.1)--(0.25+0.1,0.1)--(0.25+0.05,0.1);

\draw (0.5+0.05,-0.1)--(0.5,-0.1)--(0.5,0.1)--(0.5+0.05,0.1);
\draw (0.75-0.05,-0.1)--(0.75,-0.1)--(0.75,0.1)--(0.75-0.05,0.1);

\draw (1+0.05,-0.1)--(1,-0.1)--(1,0.1)--(1+0.05,0.1);
\draw (1.75-0.05,-0.1)--(1.75,-0.1)--(1.75,0.1)--(1.75-0.05,0.1);

\draw (2+0.05,-0.1)--(2,-0.1)--(2,0.1)--(2+0.05,0.1);
\draw (3.75-0.05,-0.1)--(3.75,-0.1)--(3.75,0.1)--(3.75-0.05,0.1);

\draw (4+0.05,-0.1)--(4,-0.1)--(4,0.1)--(4+0.05,0.1);
\draw (5.75-0.05,-0.1)--(5.75,-0.1)--(5.75,0.1)--(5.75-0.05,0.1);

\draw (6+0.05,-0.1)--(6,-0.1)--(6,0.1)--(6+0.05,0.1);
\draw (6.75-0.05,-0.1)--(6.75,-0.1)--(6.75,0.1)--(6.75-0.05,0.1);

\draw (7+0.05,-0.1)--(7,-0.1)--(7,0.1)--(7+0.05,0.1);
\draw (7.25-0.05,-0.1)--(7.25,-0.1)--(7.25,0.1)--(7.25-0.05,0.1);

\draw (7.5-0.05,-0.1)--(7.5-0.1,-0.1)--(7.5-0.1,0.1)--(7.5-0.05,0.1);
\draw (7.5+0.05,-0.1)--(7.5+0.1,-0.1)--(7.5+0.1,0.1)--(7.5+0.05,0.1);


\end{tikzpicture}
\end{center}
\caption{Geometric Covering of Interval: The interval $I=[1,30]$ is partitioned into the sequences $(I_{-1} = [1] ,I_{-2} = [2,3] ,I_{-1}= [4,7],I_0=[8,15])$ and $(I_1 = [16,23], I_2 =[24,27], I_3 = [28,29], I_4=[30])$}
\end{figure}
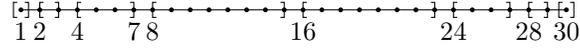

\end{document}